\documentclass{llncs}

\usepackage{amsmath}
\usepackage{fca}
\usepackage{tikz}
\usetikzlibrary{arrows,positioning}
\usepackage{pgfplots}
\usepackage{listings}
\usepackage[algo2e, vlined, linesnumbered]{algorithm2e}
\usepackage{bbm}
\usepackage{float}
\usepackage{caption}
\usepackage{subcaption}
\captionsetup{compatibility=false}
\usepackage{array}
\usepackage[bottom]{footmisc}
\usepackage{footnote}

\usepackage{marginnote}

\newcommand{\textoverline}[1]{$\overline{\mbox{#1}}$}

\title{Interactive Error Correction in Implicative Theories}
\author{Sergei O. Kuznetsov\inst{1} \and Artem Revenko\inst{1}\inst{2}}
\date{}
\institute{National Research University Higher School of Economics\\
         Pokrovskiy bd. 11, 109028 Moscow, Russia\\
         \and
         Technische Universit\"{a}t Dresden\\
         Zellescher Weg 12-14, 01069 Dresden, Germany\\
         \email{skuznetsov@hse.ru}, \\
         \email{artem\_viktorovich.revenko@mailbox.tu-dresden.de}}
 
\begin{document}
 
\maketitle

\begin{abstract}
Errors in implicative theories coming from binary data are studied. First, two classes of errors that may affect implicative theories are singled out. Two approaches for finding errors of these classes  are proposed, both of them based on methods of Formal Concept Analysis. The first approach uses the cardinality minimal (canonical or Duquenne-Guigues) implication base. The construction of such a base is computationally intractable. Using an alternative approach one checks possible errors on the fly in polynomial time via computing closures of subsets of attributes. Both approaches are interactive, based on questions about the validity of certain implications.  Results of computer experiments are presented and discussed.
 
\keywords{implicative theory, error correction, closure system, formal concept analysis}
\end{abstract}

\section{Introduction}
\subsubsection{Motivation}
 
Implicative theories consisting of formulas of the form ``if $A$, then $B$'' provide a standard way for describing the structure of domain knowledge. They are extensively used in various research areas, e.g., biology~\cite{kknd11}, pharmacology~\cite{cla13,Blinova2003}, semantic web~\cite{kltlkl12}, etc.
It is difficult to overestimate their importance for knowledge discovery \cite{frawley1992knowledge,valtchev2004formal}, decision making \cite{rasmussen1985role}, classification \cite{mirkin1996mathematical}, ontology engineering \cite{baader2010description}. In many cases the exactness of rules plays a crucial role, for example in research related to strictly formalized domains like Boolean algebras \cite{Kwuida2006}, algebraic lattices \cite{Dau2000}, or algebraic identities \cite{revenko2014automatized}.
 
In many applications an exact implicative theory is constructed from a piece of available data. It is well-known that a single mistake in this data can drastically change the resulting implicative theory \cite{Ganter1999} (the same is true for association rules if there are some exceptions and an error). The implicative theory is not going to recover from this error even if further error-free data is added to the underlying set. Therefore, implicative theories are not error tolerant. However, in the real-world applications, especially if multiple users are expected to work with data, it is hardly imaginable to guarantee the absence of errors. More than that, someone may be willing to spoil the result on purpose. Therefore, a procedure for recovering from errors is essential for the usage of implicative theories.
 
Here we assume that in the beginning there is already some data on hands and new data arrives in the work flow. The goal is to guarantee the correctness of the implicative theory with respect to the initial data which are considered to be reliable. We do not assume that a user, which is going to work with the data and the implicative theory, is always able to explicitly state any knowledge about data domain or has any knowledge about methods in use. That is why it is important to develop a transparent and easy  method for error correction. In particular, it is important to find and output possible errors in a human understandable form. To attain this goal a natural framework can be that of Formal Concept Analysis (FCA) \cite{Ganter1999}, where methods and algorithms for finding implicative theories of binary data (formal contexts) are well elaborated and widely used \cite{Ganter1984,RyDiB11}.

\subsubsection{Related Work}
 
Methods for imputing missing values are well studied. In \cite{Song2007a} and \cite{Silva-Ramirez2011a} detailed overviews of existing techniques are presented. Among others there are techniques of ignoring entries with missing values, imputing average values, and more complicated ones such as decision trees, neural networks \cite{Silva-Ramirez2011a}, Nearest Neighbours approach \cite{Jain2011a}. Having a missing value, there is no need to search for an error, as it is clear from the problem statement which value should be changed (or imputed). An approach proposed in this paper bares some similarity to the Nearest Neighbor method, but  aims at solving a different task. Besides that, the imputation techniques (like, e.g. averaging) are mostly not relevant for binary data.
 
Error finding and eliminating are widely discussed in various fields of computer science. The problems of lineage or data provenance, where one needs to explain errors, trace reasons for a query, etc. are well-known in KDD domain~\cite{spg}. These techniques are very useful and efficient, however, they are not appropriate for correcting errors in binary data tables.
 
In \cite{Fan2010a} an impressive way of using expert knowledge presented in the form of editing rules and certain regions for databases are surveyed. Information in the form of editing rules prevents the errors from getting in to the database. The approach presented in this paper aims at finding and correcting errors without any previously formalized knowledge.
 
The paper \cite{bs09} presents an interesting approach to dealing with mistakes in answering questions (like the ones we will discuss below) in the process of knowledge base completion within the framework of Description Logics.  This approach allows recovering from such mistakes in such an effective manner that the information input is used upon mistake recovery. However, the detection and correction of mistakes is left to pinpointing.
 
Pinpointing is a very successful technique for recovering from inconsistencies.
The goal of pinpointing is the following: for a given inconsistent set of rules (not only implicative) find maximal consistent subsets \cite{baader2007pinpointing,meyer2006finding}. This technique is successfully applied in different description logics. The complexity of pinpointing is normally beyond polynomial. An approach introduced in this paper (Section \ref{find}, base approach) is closely related to pinpointing; it proceeds from knowledge base constructed from data. The complexity is also beyond polynomial. However, an alternative approach (Section \ref{find}, closure approach) takes the advantage of having the data and proposes a polynomial-time solution. In this work we do not modify the knowledge base directly, but we correct the errors in data in such a way that the corresponding implicative theory becomes error-free.
 
As implicative theories is another view of Horn theories \cite{Ganter1999}, the problem of finding explanations in Horn theories turns out to be closely connected to our problem. Namely, an entry in the binary data table can also be considered as a fact to be explained. In \cite{Kautz1993a} it is shown that such explanations may be found in polynomial time. However, here we aim at explaining existence or absence of all attributes at the same time. Also we state our task and our solutions in a different language and provide algorithms for practical usage. The case of negative attributes is not covered in \cite{Kautz1993a} as opposed to this work.
 
The present paper is a follow-up work to \cite{Revenko2012c}.
 
\begin{remark}
In this paper we assume that working with a data domain we can ask an expert in the domain whose answers are always correct. However, we should ask as few questions as possible.
\end{remark}
\begin{remark}
All sets and contexts we consider in this paper are assumed to be finite.
\end{remark}
 
\subsubsection{Contributions}
We introduce an interactive procedure for making implicative theory of data error-free. This goal is achieved via finding and eliminating errors in rows of data tables. In FCA terms, we propose an approach for finding errors in descriptions  of new objects (intents in terms of FCA) that affect the canonical implication base.
\begin{enumerate}
  \item We introduce two possible classes of errors in data tables (formal contexts, Section \ref{err_classes}).
  \item We introduce two approaches to finding and eliminating errors of certain classes (Sections \ref{find}). Both aim at restoring dependencies from data domain and eliminating errors in implicative theory.
  \begin{enumerate}
    \item One approach is based on finding those implications from an implication base that are not respected by the new object intent (base approach). However, the base approach leads to an intractable solution, because constructing an implication base is intractable.
    \item We introduce another approach (closure approach), where we do not need to compute the set of all implications, and prove its effectiveness. We show that it helps to find all possible errors of certain types (Proposition \ref{proposition1}) in polynomial time (Proposition \ref{proposition2}).
  \end{enumerate}
  The approaches are experimentally compared  in Section \ref{experiment}.
\end{enumerate}
 
\section{Main Definitions}
 
In what follows we keep to standard definitions of FCA~\cite{Ganter1999}. Let $G$ and $M$ be sets and let $I\subseteq G\times M$ be a binary relation between $G$ and $M$. The triple $\context := \GMI$ is called a \emph{(formal) context}. Set $G$ is called a set of \emph{objects}, set $M$ is called a set of \emph{attributes}, $I$ is called \emph{incidence relation}.
 
Consider mappings $\varphi\colon 2^G\to 2^M$ and $\psi\colon 2^M\to 2^G$: $\varphi(X) := \{m\in M\mid gIm \  \mbox{ for all\ } g\in X\},\ \psi(A) := \{g\in G\mid gIm \ \mbox{ for all\ } m\in A\}.$ Mappings $\varphi$ and $\psi$ define a \emph{Galois connection} between $(2^G,\subseteq)$ and $(2^M,\subseteq)$,  i.e. $\varphi(X) \subseteq A \Leftrightarrow \psi(A)\subseteq X$. Hence, for any $X_1, X_2\subseteq G$, $A_1, A_2\subseteq M$ one has
\begin{enumerate}
    \item $X_1\subseteq X_2 \Rightarrow \varphi(X_2) \subseteq \varphi(X_1)$ \label{prop1}
    \item $A_1\subseteq A_2 \Rightarrow \psi(A_2) \subseteq \psi(A_1)$ \label{prop2}
    \item $X_1\subseteq \psi\varphi (X_1) \mbox{ and } A_1\subseteq \varphi\psi(A_1)$ \label{monoton}
\end{enumerate}
Usually, instead of $\varphi$ and $\psi$ a single notation $(\cdot)^{\prime}$ is used. $(\cdot)'$ is usually called a \emph{derivation operator}. For $X \subseteq G$ the set $X^{\prime}$ is called the \emph{intent} of $X$. Similarly, for $A \subseteq M$ the set $A^{\prime}$ is called the \emph{extent} of $A$.
Operator $(\cdot)''$ is idempotent, extensive and monotone, i.e., has the properties of algebraical closure both on $2^G$ and $2^M$. Hence, $(Z)^{\prime\prime}$ is called \emph{closure} of $Z$ in $\context$ for $Z \subseteq M$ or $Z \subseteq G$. If $(Z)^{\prime\prime} = Z$, set $Z$ is called  \emph{closed} in $\context$. Applying Properties \ref{prop1} and \ref{prop2} consequently one gets the \emph{monotonicity} property: for any $Z_1, Z_2 \subseteq G$ or $Z_1, Z_2 \subseteq M$ one has $Z_1 \subseteq Z_2 \Rightarrow Z_1'' \subseteq Z_2''$.

In \cite{rodriguezgeneralized} authors introduce a generalized framework for considering positive and negative attributes. In this paper we also introduce negative attributes, however, we do not need the whole framework for our purpose. Our definitions comply with the definitions from \cite{rodriguezgeneralized}.

The set $\overline{M} := \{\overline{m}\ |\ m \in M\}$ is called the set of \emph{negative} attributes. Consider the following relation $\overline{I} := \{(g, \overline{m})\ |\ (g,m) \in (G \times M) \setminus I\}$ between $G$ and $M$. The context $\context^{\delta} := (G, M \cup \overline{M}, I \cup \overline{I})$ is called the \emph{dichotomized context} to $\context$, the corresponding derivation operator is denoted by $(\cdot)^{\delta}$. Let $X \subseteq G$. Note that $\overline{m} \in X^\delta$ iff $m \notin g'$ for all $g \in X$. If $\overline{m} \in X^\delta$ then, as it does not lead to ambiguity, we informally write $\overline{m} \in X'$. In this paper objects and context are represented without negative attributes, however, in the processing stage they are normally converted to the dichotomized representation in order to be able to work with negative attributes.

Consider the context $\context^{\overline{\delta}} = (G, \overline{M} \cup \overline{\overline{M}}, \overline{I} \cup \overline{\overline{I}})$. This context is isomorphic to the context $\context^{\delta} = (G, M \cup \overline{M}, I \cup \overline{I})$ and $\overline{\overline{m}} \in X^{\overline{\delta}}\ \Leftrightarrow\ m \in X^{\delta}$.
 
A \emph{formal concept} of a formal context $\GMI$ is a pair $(X, A)$, where $X \subseteq G,\ A\subseteq M,\ X'=A,$ and $A'=X$. The set $X$ is called the extent, and the set $A$ is called the intent of the concept $(X, A)$.
 
One says that an object $g$ such that $g'\neq \emptyset$ is \emph{reducible} in a context $\GMI$ iff $\exists X \subseteq G\setminus \{g\}:\ g' = \bigcap \limits_{j\in X} j'$. Removing reducible objects does not change the concept lattice up to isomorphism.
 
In this paper implicative theories are formalized in terms of implication bases.
An \emph{implication} of $\context := \GMI$ is defined as a pair $(A,B)$, written $A\to B$, where $A, B\subseteq M$. $A$ is called the \emph{premise}, $B$ is called the \emph{conclusion} of implication $A\to B$. Implication $A\to B$ is \emph{respected by a set of attributes} $N$ if $A \nsubseteq N$ or $B \subseteq N$. Implication $A\to B$ holds (is valid) in $\context$ if it is respected by all $g'$, $g\in G$, i.e. every object, that has all the attributes from $A$, also has all the attributes from $B$, or, equivalently, if $A' \subseteq B'$. Implications satisfy \emph{Armstrong rules}:
$${{}\over{A\to A}}\quad , \quad
        {{A\to B}\over{A\cup C\to B}}\quad , \quad
        {{A\to B, B\cup C\to D}\over{A\cup C\to D}}$$
\emph{Support} of implication $A\to B$ in context $\context$ is $(A\cup B)'$, i.e., the set of all objects of $\context$, whose intents contain the premise and the conclusion of the implication. A \emph{unit implication} is defined as an implication with only one attribute in conclusion, i.e. $A \to b$, where $A \subseteq M,\ b\in M$. Using Armstrong rules, every implication $A \to B$ can be represented as a set of \emph{unit  implications} $\{A \to b\ |\ b \in B\}$, so  one can always observe only unit implications without loss of generality.
 

Consider implications of the form $A \to \overline{b}$, where $A \subseteq M, \overline{b} \in \overline{M}$ in the dichotomized context $\context^{\delta}$. This implication is said to be respected by $N \subseteq M$ if $A \nsubseteq N$ or $b \in M \setminus N$. This implication holds in $\context^{\delta}$ iff $A^{\delta} \subseteq \overline{b}^{\delta}$. In this paper all the implications with negative attributes are considered as implications of the dichotomized context.

 
An \emph{implication base} of a context $\context$ is defined as a set $\mathfrak{L}$ of implications of $\context$, from which any valid implication for $\context$ can be deduced by Armstrong rules and none of the proper subsets of $\mathfrak{L}$ has this property.  A cardinality minimal implication base was characterized in \cite{Duquenne1986} and is known as the \emph{canonical implication base}, or \emph{Duquenne-Guigues base}, or \emph{stembase}. In \cite{Ganter1984} the premises of implications of the canonical base were characterized in terms of pseudo-intents. A subset of attributes $P\subseteq M$ is called a \emph{pseudo-intent} if $P\not=P''$ and for every pseudo-intent $Q$ such that $Q\subset P$, one has $Q''\subset P$. The canonical implication base looks as follows: $\{ P \to (P''\setminus P) \mid P$ - pseudo-intent\}.
 
\section{Errors in Implicative Theories}\label{err_classes}
  
Without loss of generality we consider all observable properties to be expressed in terms of positive attributes from $M$. We aim at restoring valid implications and, therefore, correct errors in implicative theory of data. The goal is achieved if all implications are valid implications of the context. As already mentioned, all implications are reduced to unit ones.
 
Consider the following possible classes of implicative formulas ($A \subseteq M,\ b, c\in M$), which will be called dependencies:
\begin{enumerate}
    \item If an entity has all attributes from $A$, then it has attribute $b$ ($A \to b$); \label{1class}
    \item If an entity has all attributes from $A$, then it does not have attribute $b$ ($A \to \overline b$); \label{2class}
\end{enumerate}

\remark{In this work we consider only data domain dependencies in the form of implications with no negative attributes in the premise. It is possible to consider negative attributes in the premise by means of considering complementary context $(G,M, (G \times M) \setminus I)$. However, this is equivalent to introducing disjunction to our language: $A \to B \vee C\ :\Leftrightarrow\ A, \overline{B} \to C$. Then, having negation and disjunction we end up in the full propositional logic, for which computing the closure is not polynomial anymore. Therefore, it would not be possible to introduce a polynomial solution of this problem.}
 
Only formulas of Class 1 are standard FCA implications, formulas of Class 2 are FCA implications if the negation of attributes are explicitly introduced in the context. If there are no errors in a context, all the dependencies of Class \ref{1class} are deducible from an implication base. However, if not enough data is added to the context yet, we may get false consequences. Therefore, not all valid implications of the context have to necessarily be data domain dependencies. Nevertheless, it is guaranteed that none of valid dependencies is lost, and, as new objects are added, the number of false consequences is reduced (this is essentially the idea behind Attribute Exploration \cite{Ganter1999}). The situation is different if an erroneous object (data table row) is added. The erroneous object may violate a data domain dependency. In this case, until the error is found and corrected, we are not able to deduce all dependencies valid in the data domain from the implication base, no matter how many error-free objects are added afterwards.

\section{Finding Errors}\label{find}

We introduce two different approaches to finding errors. The first one is based on inspecting the canonical base of a context (base approach). When adding a new object to the context one may find all implications from the canonical base of the context such that the implications are not respected by the intent of the new object. These implications are then output as questions to an expert in form of implications. If at  least one of these implications is accepted, the object intent is erroneous. Since the canonical base is the most compact (in the number of implications) representation of all valid implications of a context, it is guaranteed that minimal number of questions is asked and no valid dependencies of Class \ref{1class} are left out. This approach can be seen as a version of pinpointing in the presence of only implicative rules.

Although this approach allows one to reveal all dependencies of Class \ref{1class}, there are several issues. The problem of producing the canonical basis with known algorithms is intractable. Recent theoretical results ~\cite{Kuznetsov2004On}, \cite{Distel2011a}, \cite{Kuznetsov2008Some}, \cite{Kuznetsov2013Comp} suggest that the canonical base can hardly be computed with better worst-case complexity than that of the existing approaches~\cite{Ganter1984}. One can use other bases (for example, there has been recent progress in computing proper premises \cite{RyDiB11}), but the algorithms known so far are still too costly and non-minimal bases do not guarantee that the expert is asked minimal sufficient number of questions.

However, since we are only interested in implications corresponding to one object at a time, it may be not necessary to compute the whole implication base. The second approach takes this fact into account. Let $A \subseteq M$ be the intent of the object under inspection; we separate it from the context. $m \in A''$ iff $\forall g \in G: A \subseteq g' \Rightarrow m \in g'$, in other words, $A''$ contains the attributes common to all object intents containing $A$. The set of unit implications $\{A \to b\ |\ b \in A''\setminus A\}$ can then be shown to the expert. If all implications are rejected, no attributes are forgotten in the new object intent. Otherwise, there are missing attributes in the object intent. Unfortunately, this simple observation does not allow to correct all the errors in implicative theory.

\begin{example}
Consider Error4 from Fig. \ref{err_cxt}. Error4 has set of attributes $A$ = \{has equal legs, has equal angles, all legs equal, at least 3 different legs\}. The closure $A''$ in the context from Fig. \ref{cxt} is equal to the set of all attributes $M$. Therefore, closure approach would ask if the user has forgotten to add all the attributes that are still possible to add. The suggestion to add all other attributes is not supported by any example in the context as there are no objects with all attributes. More than that, such solution is not minimal in general. Therefore, such a solution is not satisfactory.
\end{example}

A more general description of the situation in the example above is the following. Let $A \subseteq M$ be the intent of the inspected object such that $\nexists g \in G: A \subseteq g^{\prime}$. In this case $A^{\prime\prime} = M$ and the implication $A \to A'' \setminus A$ has an empty support. We could try to solve this problem by allowing to ask only those questions that have a supporting example in the context.

\begin{example}
Consider again Error4 from Fig. \ref{err_cxt}. As support for every question is required only the following question would be asked: has equal legs, has equal angles, at least 3 different legs $\to$ at least 3 different angles? Support: \{Quadrangle with 2 equal legs and 2 equal angles, Rectangular trapezium with 2 equal legs\}. However, a smaller and more intuitive correction would be to suggest the user to remove the attribute ``at least 3 different legs''. If this is indeed the source of error then even after adding the suggested attribute the error would not be eliminated and would impact the implicative theory.
\end{example}

At this point we conclude that it is necessary to be able to suggest corrections for errors of Class \ref{2class}. Such errors may be present if the object intent contains subset of attributes that none of the objects in the context has.

\subsection{Crucial Implications}

Suppose we have a new object $g_n$ with intent $A$ and we want to see whether $A$ respects (is consistent with) the previous knowledge given by the context, i.e., does not have errors of Classes 1 or 2. In order to find errors of Class 1 we need to know, whether, according to implications (implicative dependencies) of the context, the new object should have more attributes than just $A$. If this is the case, there should be an implication $B \to c$ not respected by $A$: $B \subseteq A$, but $c \not\in A$. Similarly, in order to find errors of Class 2, we look for implications $B \to \overline{c}$ such that $B \subseteq A$, but $c \in A$. The following proposition shows that we do not need to look for all such implications, but for a much smaller subset of them.

{\samepage
\begin{proposition}\label{proposition1}
Let $\context = \GMI,\ g_n' = A, A \subseteq M$. Let
$$\mathcal{I}_A^{(\context)} = \{B \to c\ |\ B\in \mathcal{MC}_A, c \in (B'' \setminus A) \cup (\overline{A \setminus B})\},$$
where $\mathcal{MC}_A = \{B \in \mathcal{C}_A\ |\ \nexists C \in \mathcal{C}_A: B \subset C\} \mbox{ and }\mathcal{C}_A = \{A \cap g'\ |\ g \in G\}$. The set $\mathcal{I}_A^{(\context)}$ contains (unit) implications with nonempty support that are valid in $\context$ and not respected by $A$. If an implication $(E \to d),\ E \subseteq A, d \in (M \setminus A) \cup \overline{A}$, with nonempty support is valid in $\context$, then there is an implication $(B \to d) \in {\cal I}_A^{(\context)}$ such that $E \subseteq B \subseteq A$.
\end{proposition}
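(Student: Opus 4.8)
The plan is to prove the two halves of the statement separately --- a soundness half (everything placed into $\mathcal{I}_A^{(\context)}$ is a valid unit implication with nonempty support that $A$ violates) and a completeness/domination half --- and in each half to treat the positive conclusion $c\in M$ and the negative conclusion $c=\overline{m}$ in parallel, since the two are governed by the two summands $B''\setminus A$ and $\overline{A\setminus B}$ in the definition of $\mathcal{I}_A^{(\context)}$.

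For soundness I would fix $B\to c$ with $B\in\mathcal{MC}_A$ and choose $g\in G$ with $B=A\cap g'$; this already records $B\subseteq A$ and $g\in B'$. If $c\in B''\setminus A$, the implication is valid because $B\to B''$ always holds; its support is $B'\cap c'=B'$ (as $c\in B''$ gives $B'\subseteq c'$), which contains $g$; and it is not respected by $A$ because $B\subseteq A$ but $c\notin A$. If $c=\overline{m}$ with $m\in A\setminus B$, validity of $B\to\overline{m}$ in $\context^{\delta}$ says precisely that no object containing all of $B$ contains $m$, and here maximality of $B$ is used: if some $h$ had $B\subseteq h'$ and $m\in h'$, then $A\cap h'$ would be a member of $\mathcal{C}_A$ lying strictly above $B$ (it contains $B$ together with the element $m\notin B$), contradicting $B\in\mathcal{MC}_A$; for nonempty support, $m\in A\setminus B$ forces $m\notin g'$, so $g$ witnesses both $B$ and the absence of $m$; and $A$ does not respect it because $B\subseteq A$ and $m\in A$.

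For completeness I would take $E\to d$ valid, not respected by $A$, with $E\subseteq A$, $d\in(M\setminus A)\cup\overline{A}$, and nonempty support, and pick $h\in G$ in the support. In either case this gives $E\subseteq h'$, hence $E\subseteq A\cap h'\in\mathcal{C}_A$; I would then enlarge $A\cap h'$ to a maximal $B\in\mathcal{MC}_A$ (possible since $\mathcal{C}_A$ is finite), so $E\subseteq B\subseteq A$, and fix $g\in G$ with $B=A\cap g'$, noting $E\subseteq B\subseteq g'$. It then remains only to check $d\in(B''\setminus A)\cup\overline{A\setminus B}$, which places $B\to d$ in $\mathcal{I}_A^{(\context)}$. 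If $d\in M\setminus A$: validity of $E\to d$ means $d\in E''$, and $E\subseteq B$ gives $E''\subseteq B''$ by monotonicity, so $d\in B''\setminus A$. If $d=\overline{m}$ with $m\in A$: since $g$ contains all of $E$, validity of $E\to\overline{m}$ forces $m\notin g'$, hence $m\notin A\cap g'=B$, i.e.\ $m\in A\setminus B$.

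The one step I expect to be the actual crux is the negative-conclusion branch of completeness. The naive move --- taking $B$ to be $A\cap h'$ for the support witness $h$ itself --- need not work, because $A\cap h'$ may fail to be maximal and so $B\to d$ need not belong to $\mathcal{I}_A^{(\context)}$; one must pass to a maximal $B=A\cap g'$ above it and then re-establish $m\notin g'$ using validity of $E\to\overline{m}$ applied to the \emph{new} witness $g$, which is legitimate only because the passage preserves $E\subseteq B\subseteq g'$. Symmetrically, in soundness the content of the negative case is carried entirely by the maximality defining $\mathcal{MC}_A$; everything else is routine manipulation of the Galois connection and of the definition of support.
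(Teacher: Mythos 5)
Your proof is correct and takes essentially the same route as the paper's: a soundness half and a completeness half, each split into the positive case ($c\in B''\setminus A$) and the negative case ($c\in\overline{A\setminus B}$), with maximality of the intersections $A\cap g'$ carrying the negative cases, and with the same key move in completeness of passing from the support witness to a maximal $B=A\cap g'$ and re-applying validity of $E\to\overline{m}$ to that new witness $g$. If anything, your soundness half is more explicit than the paper's, which simply asserts $B'\subseteq c'$ rather than spelling out the maximality argument you give for the validity of $B\to\overline{m}$.
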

}

The last statement says that the set of implications ${\cal I}_A^{(\context)}$ is enough to deduce every attribute that can be deduced from implications of the context with nonempty support. Implications from ${\cal I}_A^{(\context)}$ are called \emph{A-crucial} in $\context$. If ambiguity is excluded we omit the upper index and write simply ${\cal I}_A$.

\begin{proof}
Let $(B\to c) \in {\cal I}_A$, hence $B'\subseteq c'$ by the definition of implication. By definition of ${\cal I}_A$ one has $B = A\cap g'$ for some $g\in G$. Then $B\subseteq g'$ and by the antimonotonicity of $(\cdot)'$ one has $g''\subseteq B'$. Hence, $g''\subseteq B' \subseteq c'$ and $c''\subseteq g''' = g'$. Since $c\in c''$, one has $c\in g'$. Since $c\in g'$ and $B'\subseteq g$, by the properties of $(\cdot)'$, one has $(B\cup c)' = B'\cap c' \subseteq g$. Hence, the support of $B\to c$ contains $g$ and is not empty. Consider the following possible cases:
\begin{enumerate}
	\item $c \in B''\setminus A$. Since $B''\setminus A \not\subseteq A$ implication $B\to c$ is not respected by $A$;
	\item $c \in \overline{A\setminus B}$. Since $A\setminus B\subseteq A$, one has $c\not \in A$ and $B\to c$ is not respected by $A$.
\end{enumerate}

Now let $E \to d$ be a valid implication not respected by $A$ with a nonempty support. Then $E \subseteq A, d\notin A$ and there exists $g \in G$ such that $E\subseteq g', d \in g'$.
Therefore, there exists $B_{\mathcal{C}_A} \in \mathcal{C}_A$ such that $B_{\mathcal{C}_A} = A \cap g'$ and $E \subseteq B_{\mathcal{C}_A}$. Moreover, there exists $B_{\mathcal{MC}_A} \in \mathcal{MC}_A$ such that $B_{\mathcal{C}_A} \subseteq B_{\mathcal{MC}_A}$. By construction $B_{\mathcal{MC}_A} \subseteq A$, therefore, $E \subseteq B_{\mathcal{MC}_A} \subseteq A$. Consider the following possible cases:
\begin{enumerate}
	\item $d \in M$. As $E \subseteq B_{\mathcal{MC}_A}$ by the properties of $(\cdot)''$ one has $E'' \subseteq B_{\mathcal{MC}_A}''$. By definition of the validity of an implication one has $d \in E''$, hence, $d \in B_{\mathcal{MC}_A}''$. Therefore, $(B_{\mathcal{MC}_A} \to d) \in {\cal I}_A$;
	\item $d \in \overline{M}$. Let $\overline{c} = d$. For any $B \in \mathcal{MC}_A$ there exists $g_* \in G$ such that $B = A \cap g_*'$. If $E \subseteq g_*'$ then by the validity of the implication one has $d \in g_*'$, hence, $c \notin g_*'$. Therefore, $c \notin B$. As $d \notin A$ one has $c \in A$. Hence, $c \in A\setminus B$ and $d \in \overline{A\setminus B}$. Therefore, $(B_{\mathcal{MC}_A} \to d) \in {\cal I}_A$.\qed
\end{enumerate}
\end{proof}

\begin{proposition}\label{proposition2}
For a new object $g$ with intent $A$ one has ${\cal I}_A \leq |G|\times |M|$.
\end{proposition}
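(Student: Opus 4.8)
The plan is to bound $|{\cal I}_A|$ by a two–level count: first the number of distinct premises occurring in ${\cal I}_A$, then, for each premise, the number of admissible conclusions.

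First I would bound the number of premises. Every premise of an implication in ${\cal I}_A$ lies in $\mathcal{MC}_A \subseteq \mathcal{C}_A = \{A \cap g'\ |\ g \in G\}$. Since $\mathcal{C}_A$ is the image of $G$ under the map $g \mapsto A \cap g'$, it is a set of at most $|G|$ elements, even though many objects may yield the same trace $A \cap g'$. Hence there are at most $|G|$ distinct premises among the implications of ${\cal I}_A$, and implications with distinct premises are distinct, so ${\cal I}_A$ is the disjoint union, over $B \in \mathcal{MC}_A$, of the sets $\{B \to c \in {\cal I}_A\}$.

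Next I would bound, for a fixed $B \in \mathcal{MC}_A$, the number of conclusions $c$ with $(B \to c) \in {\cal I}_A$. By definition such $c$ belongs to $(B'' \setminus A) \cup (\overline{A \setminus B})$, and these two sets are disjoint, the first being a subset of $M$ and the second a subset of $\overline{M}$. Note $B = A \cap g_*'$ for some $g_*$, so $B \subseteq A$; consequently $B'' \setminus A \subseteq M \setminus A$ (here $B''$ is taken in $\context$, hence $B'' \subseteq M$), which has at most $|M| - |A|$ elements, while $\overline{A \setminus B}$ has exactly $|A \setminus B| = |A| - |B|$ elements. Therefore the number of admissible conclusions for the premise $B$ is at most $(|M| - |A|) + (|A| - |B|) = |M| - |B| \le |M|$.

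Finally, summing over all premises, $|{\cal I}_A| \le \sum_{B \in \mathcal{MC}_A} |M| \le |G| \cdot |M|$, which is the claim. There is no genuine obstacle here: it is a straightforward counting argument, and the only points needing care are that $\mathcal{C}_A$ (being a set) has at most $|G|$ elements regardless of repetitions, and that $B \subseteq A$, which is what makes the two conclusion pools disjoint and keeps their combined size at $|M|$ rather than $2|M|$.
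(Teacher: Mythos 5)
Your proposal is correct and follows essentially the same counting argument as the paper: at most $|G|$ premises since $\mathcal{MC}_A \subseteq \mathcal{C}_A = \{A \cap g' \mid g \in G\}$, and at most $|M|$ conclusions per premise since $|B'' \setminus A| \le |M| - |A|$ and $|\overline{A \setminus B}| \le |A|$. Your slight refinement (using $|A \setminus B| = |A| - |B|$ to get the bound $|M| - |B|$ per premise) is a minor sharpening but does not change the argument.
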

 
\begin{proof}
By definition of $\mathcal{MC}_A$ it contains no more than $|G|$ elements. For any $A, B \subseteq M$ one has $|B''| \leq |M|$, hence $|B'' \setminus A| \leq |M| - |A|$; $|\overline{A \setminus B}| \leq |A|$. Hence, $|(B'' \setminus A) \cup (\overline{A \setminus B})|\ \leq\ |(B'' \setminus A)| + |(\overline{A \setminus B})|\ \leq\ |M| - |A| + |A| = |M|$. Therefore, ${\cal I}_A$ contains not more than $|G|\times |M|$ implications.\qed
\end{proof}
 
According to Proposition \ref{proposition2} to check errors of Classes 1 and 2 one has to consider polynomialy many implications instead of exponentially many implications in the cardinality-minimum canonical base~\cite{Kuznetsov2004On}.

Proposition \ref{proposition1} allows one to design an algorithm for computing the set of questions (in form of implications) that can help to reveal possible errors of Classes \ref{1class} and \ref{2class}.

\begin{proposition}\label{proposition3}
Let $g$ be a new object with intent $A$. ${\cal I}_A$ can be computed in $O(|G|^2\times |M|)$ time.
\end{proposition}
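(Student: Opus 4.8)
The plan is to exhibit an explicit procedure that constructs $\mathcal{I}_A$ and to account for its cost step by step, leaning on Proposition~\ref{proposition2} for the size of the output. First I would form the family $\mathcal{C}_A = \{A \cap g' \mid g \in G\}$: iterating over the $|G|$ objects and intersecting each $g'$ with $A$ costs $O(|M|)$ per object, so this stage is $O(|G|\times|M|)$. At this point $\mathcal{C}_A$ is a list of at most $|G|$ subsets of $M$.

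Next I would extract the maximal elements $\mathcal{MC}_A$ by pairwise comparison: for each ordered pair $(B,C)$ from $\mathcal{C}_A$ test whether $B \subseteq C$, discarding $B$ whenever a strict superset is found. There are $O(|G|^2)$ pairs and each subset-containment test is $O(|M|)$, giving $O(|G|^2\times|M|)$ — this is the dominant term. (One has to be slightly careful with ties when $B = C$ arising from different objects, but keeping one representative is harmless and does not change the bound.)

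Then, for each surviving $B \in \mathcal{MC}_A$, I would compute the closure $B''$. Since $B'' = \bigcap_{g \in G,\ B \subseteq g'} g'$, this is again $O(|G|\times|M|)$ per set $B$, and there are at most $|G|$ such sets, so forming all the closures costs $O(|G|^2\times|M|)$. Given $B''$ and $A$, assembling the conclusion set $(B''\setminus A)\cup(\overline{A\setminus B})$ and emitting the corresponding unit implications $B \to c$ is $O(|M|)$ per $B$ by Proposition~\ref{proposition2}, hence $O(|G|\times|M|)$ overall — negligible. Summing the stages gives the claimed $O(|G|^2\times|M|)$.

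The main obstacle is not any single step but making sure the closure computations are not charged more than $O(|G|\times|M|)$ each: a naive ``apply the canonical base'' reading would be catastrophic, so the proof must make explicit that $B''$ is obtained directly from the context as the intersection of the object intents above $B$. I would state this computation of $B''$ as a small lemma-like remark inside the proof. Everything else (intersections, containment tests, set differences over $M$) is routine bit-vector bookkeeping, and I would only note the representation (characteristic vectors of length $|M|$) rather than belabour it.
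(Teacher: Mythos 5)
Your proposal is correct and follows essentially the same route as the paper's proof: it constructs the candidate premises $A \cap g'$, filters the maximal ones by pairwise containment in $O(|G|^2\times|M|)$, and computes each closure directly from the context in $O(|G|\times|M|)$, matching the paper's \texttt{inspect\_closure} algorithm and its complexity accounting. Your explicit remark that $B''$ is obtained as the intersection of object intents containing $B$ (rather than via an implication base) is exactly the point the paper's analysis relies on.
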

 
\begin{proof}
  Consider the following \verb=inspect_closure= algorithm
  
  \begin{algorithm2e}[H]\label{pseudocode}
    \KwIn{$\context=(G, M, I),\ A \subseteq M$}
    \KwOut{$\mathcal{I}_A$}
    \If{$A''=A$}{\KwRet{$\emptyset$}}
    Candidates $= \{$object$'\cap A\ |\ $object $\in G\}$\\
    MaxCandidates $= \{C \in$ Candidates$\ |\ \nexists B \in $ Candidates$:\ C \subseteq B\}$\\
    Result $= \emptyset$\\
    \For{Candidate \textbf{in} MaxCandidates}
      {Result.add(\{Candidate $\to d\ |\ d \in ($Candidate$''\setminus A \cup\overline{A \setminus\text{Candidate}})\}$)}
  \KwRet{} Result
  \end{algorithm2e}
 
Here $A$ is the intent of the new object. In line 3 the algorithm computes the set of all subsets that are candidates for the premises of crucial implications. In line 4 all non-maximal subsets are discarded. In lines 6 and 7 closures of the premises are computed and the corresponding implications are added to the set of crucial implications. To estimate the worst-case complexity of the algorithm, note that executing line 1 and line 3 take at most O$(|G|\times |M|)$ time, line 4 takes O$(|M|)$ time for each of O$(|G|^2)$ containment tests, and lines 6 and 7 take O$(|G|\times |M|)$ time for computing closures of at most O$(|G|)$ premises of crucial implications. Hence, the total worst-case time complexity is O$(|G|^2\times |M|)$.\qed
\end{proof}

If there are several new objects, the set of crucial implications for each new object is in general dependent on the order of adding objects. The following statement shows which additional questions should be asked in order to compensate for this dependency.
 
\begin{proposition}\label{two_new_objects}
Let $g_1$ and $g_2$ be new objects with intents $A_1$ and $A_2$, respectively. Let $\context_1 = (G \cup g_1, M, I \cup \{(g_1, m)\ |\ m \in A_1\})$. If $\nexists g \in G: A_1 \cap A_2 \subseteq g'$ then $I_{A_2}^{(\context_1)} \setminus I_{A_2}^{(\context)} = \{A_2 \cap A_1 \to m\ |\ m \in (A_1 \setminus A_2) \cup (\overline{A_2 \setminus A_1})\}$, otherwise $I_{A_2}^{(\context_1)} = I_{A_2}^{(\context)}$.
\end{proposition}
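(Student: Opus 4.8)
The plan is to analyze how the set $\mathcal{C}_{A_2}$ changes when we pass from $\context$ to $\context_1 = (G \cup g_1, M, I \cup \{(g_1,m)\mid m\in A_1\})$. By definition, $\mathcal{C}_{A_2}^{(\context_1)} = \{A_2 \cap g' \mid g \in G\} \cup \{A_2 \cap g_1'\} = \mathcal{C}_{A_2}^{(\context)} \cup \{A_2 \cap A_1\}$, since $g_1' = A_1$ in $\context_1$. So the only candidate premise that can be new is $A_2 \cap A_1$. First I would dispose of the easy case: if there is $g \in G$ with $A_1 \cap A_2 \subseteq g'$, then $A_1 \cap A_2 \subseteq A_2 \cap g' \in \mathcal{C}_{A_2}^{(\context)}$, so the new set $A_2\cap A_1$ is dominated and cannot appear in $\mathcal{MC}_{A_2}$. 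One must still check that no previously maximal element of $\mathcal{C}_{A_2}$ gets demoted and that closures of the surviving premises do not change — but adding object $g_1$ can only shrink the closure operator on the extent side in a controlled way, and since $\mathcal{MC}_{A_2}$ is the same set of premises $B = A_2 \cap g'$ with $g \in G$, and each such $B$ still has the supporting object $g$, one checks $B''$ is computed the same way; the conclusions $(B'' \setminus A_2)\cup(\overline{A_2\setminus B})$ depend on $B''$ in $\context_1$, so I would verify $B'' $ is unchanged for $B \in \mathcal{MC}_{A_2}$, which follows because $B = A_2\cap g' \subseteq g'$ forces $g \in B'$, hence $g_1$'s presence does not remove $g$ from $B'$ and the intersection $B' $ over the relevant objects is only possibly enlarged, shrinking $B''$ — here I must be slightly careful and argue $g_1 \notin B'$ or that adding it does not change $B''$. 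Actually the cleanest route: $B'' = \bigcap\{h' \mid h \in G\cup g_1,\ B\subseteq h'\}$; if $B \subseteq g_1' = A_1$ then $g_1$ joins the intersection and $B''$ could shrink, but $B \subseteq A_1$ together with $B \subseteq A_2$ gives $B \subseteq A_1\cap A_2 \subseteq g'$ for the witnessing $g$, contradiction unless we are in the second case. So in case one, $B \not\subseteq A_1$ for every $B\in\mathcal{MC}_{A_2}$, hence $g_1 \notin B'$, hence $B''$ is unchanged. This yields $I_{A_2}^{(\context_1)} = I_{A_2}^{(\context)}$.

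For the first case, when $\nexists g\in G: A_1\cap A_2\subseteq g'$, the set $A_2\cap A_1$ is not dominated by any element of $\mathcal{C}_{A_2}^{(\context)}$, so it is maximal in $\mathcal{C}_{A_2}^{(\context_1)}$ and joins $\mathcal{MC}_{A_2}^{(\context_1)}$; moreover no old element is demoted (the dominance order among old elements is unchanged, and the new element dominates an old $B$ only if $B \subseteq A_1\cap A_2$, i.e. $B$'s witness $g$ satisfies $A_1\cap A_2 \not\supseteq B$... more simply $B\subseteq A_1 \cap A_2$ would force the witnessing $g$ to have $A_1\cap A_2 \not\subseteq g'$ compatible, so an old $B$ could drop out; but then the implications it generated are subsumed — I would check that any implication with premise an old, now non-maximal $B$ is recovered via Proposition \ref{proposition1} applied inside $\context_1$, or simply note the statement only claims the set difference, and dropped premises generate implications already in $I_{A_2}^{(\context)}$, so they do not contribute to the difference). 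Then the new crucial implications are exactly $\{A_2\cap A_1 \to d \mid d \in ((A_2\cap A_1)'' \setminus A_2) \cup (\overline{A_2\setminus(A_2\cap A_1)})\}$ computed in $\context_1$. Now $(A_2\cap A_1)^{\prime\prime}$ in $\context_1$: since $g_1 \in (A_2\cap A_1)'$ (as $A_2\cap A_1 \subseteq A_1 = g_1'$), and by hypothesis no $g \in G$ contains $A_2\cap A_1$, the only object in $(A_2\cap A_1)'$ is $g_1$ itself, so $(A_2\cap A_1)'' = g_1'' = g_1' = A_1$. Also $\overline{A_2 \setminus (A_2\cap A_1)} = \overline{A_2\setminus A_1}$. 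Hence the new implications are $\{A_2\cap A_1 \to d \mid d \in (A_1\setminus A_2)\cup(\overline{A_2\setminus A_1})\}$, which is precisely $\{A_2\cap A_1\to m \mid m\in(A_1\setminus A_2)\cup(\overline{A_2\setminus A_1})\}$ as claimed.

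The main obstacle I anticipate is the bookkeeping around non-maximality: ensuring that when an old premise $B$ ceases to be maximal (absorbed by the new $A_1\cap A_2$ in the second scenario — though actually this cannot happen in scenario one, and in scenario two absorption would require $B\subseteq A_1\cap A_2$, forcing $B$'s witness to violate the hypothesis, so in fact no old premise is ever absorbed), no crucial implication is lost relative to what the proposition asserts. I would handle this by observing that $A_1\cap A_2 \subseteq B$ is impossible for $B\in\mathcal{MC}_{A_2}^{(\context)}$ exactly when there is no $g\in G$ with $A_1\cap A_2\subseteq g'$ — wait, that is the reverse containment; the genuinely delicate point is that $A_1 \cap A_2 \subseteq B = A_2 \cap g'$ would give $A_1\cap A_2 \subseteq g'$, contradicting the hypothesis of case one, so indeed in case one the new element neither absorbs nor is absorbed, and $\mathcal{MC}_{A_2}^{(\context_1)} = \mathcal{MC}_{A_2}^{(\context)} \cup \{A_1\cap A_2\}$ cleanly. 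That observation is the crux; the rest is the closure computation $(A_1\cap A_2)'' = A_1$ carried out above.
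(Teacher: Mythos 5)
Your overall route coincides with the paper's: you track how $\mathcal{C}_{A_2}$ changes when $g_1$ is adjoined (the only candidate new premise being $A_2\cap A_1$), and in the first branch you compute $(A_2\cap A_1)'=\{g_1\}$, hence $(A_2\cap A_1)''=A_1$ in $\context_1$, exactly as the paper does; your additional bookkeeping remark that an old premise absorbed by $A_2\cap A_1$ generates nothing in $\context_1$ and therefore contributes nothing to the set difference is correct, and it is the right way to deal with demotion, which the paper passes over silently.

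The genuine problem is in your treatment of the second (``otherwise'') branch. The step ``$B\subseteq A_1$ together with $B\subseteq A_2$ gives $B\subseteq A_1\cap A_2\subseteq g'$ for the witnessing $g$, contradiction'' is not a contradiction in that branch: the witness $g$ exists there, and all maximality of $B$ lets you conclude is $B=A_2\cap g'=A_1\cap A_2$. In that situation $g_1$ really does enter $B'$ in $\context_1$, the closure $B''$ can shrink, and the exact equality $I_{A_2}^{(\context_1)}=I_{A_2}^{(\context)}$ you are trying to prove can fail: take $G=\{g\}$ with $g'=\{a,b,c\}$, $A_1=\{a,b,y\}$, $A_2=\{a,b,x\}$; the witness condition holds, yet $\{a,b\}\to c$ lies in $I_{A_2}^{(\context)}$ but not in $I_{A_2}^{(\context_1)}$. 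So this half of your argument cannot be repaired as written; what does hold (and what the paper's own proof actually argues) is only that no new implications arise, i.e. $I_{A_2}^{(\context_1)}\setminus I_{A_2}^{(\context)}=\emptyset$, and that already follows from your correct observations that $\mathcal{MC}_{A_2}$ acquires no new element in this branch and that adding $g_1$ can only shrink closures. A second, smaller slip: at the end you claim that in the first branch no old premise is ever absorbed, because $B\subseteq A_1\cap A_2$ would ``force $B$'s witness to violate the hypothesis''; it does not ($A_2\cap g'\subset A_1\cap A_2$ says nothing about $A_1\cap A_2\subseteq g'$), so $\mathcal{MC}_{A_2}^{(\context_1)}=\mathcal{MC}_{A_2}^{(\context)}\cup\{A_1\cap A_2\}$ is not guaranteed --- discard that ``crux'' and keep your earlier fallback argument, which suffices for the claimed set difference.
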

 
\begin{proof}
By definition of $\cal{I}_A^{(\context)}$ only maximal intersections may become premises of implications. Hence, if there exists $g \in G$ such that $A_1 \cap A_2 \subseteq g'$ then no new implications can arise. However, if such $g$ does not exist, the set of new implications, by definition of $\cal{I}_A^{(\context)}$, is $\{(A_1 \cap A_2) \to m\ |\ m \in (A_1 \cap A_2)'' \setminus A_2 \cup \overline{A_2 \setminus (A_1 \cap A_2)}\}$. As $A_2 \setminus (A_1 \cap A_2) = A_2 \setminus A_1$ and, by assumption about maximal intersection, $(A_1 \cap A_2)'' = A_1$, the set of new implications is $\{A_2 \cap A_1 \to m\ |\ m \in (A_1 \setminus A_2) \cup (\overline{A_2 \setminus A_1})\}$. \qed
\end{proof}

Obviously, no more than $|(A_1 \setminus A_2) \cup (A_2 \setminus A_1)|$ additional questions may arise.
It is also easy to see that additional implications only arise in case where both new objects have maximal intents in $\context$. However, as we do not require any information about maximal intents in data domain, we have to be careful when adding an object with maximal intent. The following corollary states clearly which implications should be considered in order to guarantee the absence of errors that may affect the implication base.

\begin{proposition}\label{max_intent}
Let $g_n$ be a new object with intent $A$. The sets of implications $I_1 = \{A \to m\ |\ m \in M\setminus A,\,\nexists g \in G: A \subseteq g'\}$ and $I_2 = \{(A-a) \to \overline{a}\ |\ a \in A,\,\nexists g \in G: (A-a) \subseteq g'\}$ are valid in $\context$, have empty support, and are not respected by $A$.
\end{proposition}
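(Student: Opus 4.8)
The plan is to verify the three asserted properties — validity in $\context$, empty support, and non-respect by $A$ — separately for the two families $I_1$ and $I_2$, reducing everything in each case to the single observation that the hypothesis attached to an implication says exactly that its premise has empty extent.

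First I would treat $I_1$. Fix $m \in M \setminus A$ with $\nexists g \in G : A \subseteq g'$; this last condition is literally the statement that $A' = \{g \in G \mid A \subseteq g'\} = \emptyset$. Validity of $A \to m$ is then immediate, since $A' = \emptyset \subseteq m'$ (recall $A \to m$ holds in $\context$ iff $A' \subseteq m'$). The support of $A \to m$ is $(A \cup \{m\})' = A' \cap m' \subseteq A' = \emptyset$, so it is empty. Finally, by the definition of an implication being respected by a set of attributes, $A \to m$ is not respected by $A$ because $A \subseteq A$ while $m \notin A$ (as $m \in M \setminus A$).

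Next I would handle $I_2$, whose members are implications of the dichotomized context $\context^{\delta}$. Fix $a \in A$ with $\nexists g \in G : (A - a) \subseteq g'$. Since $A - a \subseteq M$ consists of positive attributes only, its extent in $\context^{\delta}$ coincides with its extent in $\context$, namely $(A-a)^{\delta} = \{g \in G \mid A - a \subseteq g'\}$, which is empty by assumption. Hence $(A-a) \to \overline{a}$ holds in $\context^{\delta}$ because $(A-a)^{\delta} = \emptyset \subseteq \overline{a}^{\delta}$ (recall that an implication $B \to \overline{b}$ holds in $\context^{\delta}$ iff $B^{\delta} \subseteq \overline{b}^{\delta}$); its support $((A-a) \cup \{\overline{a}\})^{\delta} \subseteq (A-a)^{\delta}$ is empty; and, using the notion of respect for implications of the form $B \to \overline{b}$ given earlier, it is not respected by $A$ since $A - a \subseteq A$ while $a \notin M \setminus A$ (as $a \in A$).

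I do not expect a genuine obstacle here: the whole statement unwinds directly from the definitions once one recognizes that the side conditions $\nexists g \in G : A \subseteq g'$ and $\nexists g \in G : (A-a) \subseteq g'$ are precisely the emptiness of the relevant premise extents, from which validity and empty support follow trivially, and non-respect is just a one-line check. The only point needing a little care is the bookkeeping for $I_2$ in the dichotomized context — confirming that the extent of a set of positive attributes is unchanged when passing from $\context$ to $\context^{\delta}$, and applying the correct definition of ``respected by $N$'' for an implication with a negative conclusion.
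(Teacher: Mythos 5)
Your proposal is correct and follows essentially the same route as the paper's proof: both reduce everything to the observation that the side conditions mean the premise is contained in no object intent (equivalently, its extent is empty), which immediately gives validity and empty support, and then check non-respect by $A$ directly from the definitions. Your version is merely a bit more explicit than the paper's, in particular in spelling out the dichotomized-context bookkeeping for $I_2$.
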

 
Note that if there exists $g \in G$ such that $A \subseteq g'$ then both sets $I_1$ and $I_2$ are empty.
 
\begin{proof}
By definition $B \to c$ is respected by $N\subseteq M$ if $B\nsubseteq N$. By definition of $I_1$ and $I_2$ all the premises are not contained in any object intents from the context. Therefore, implications are valid, however, they are not supported by any of the object intents.
 
As $A\subseteq A,\, (M\setminus A) \nsubseteq A$, implications from $I_1$ are not respected by $A$. As $\forall a\in A: (A-a)\subseteq A,\, a \notin (M\setminus A)$, implications from $I_2$ are not respected by $A$. \qed
\end{proof}

According to Proposition \ref{max_intent} the number of additional questions for new objects that have maximal intents cannot exceed $|M|$. As none of the questions have objects from context in support we suggest that maximal objects should be checked ``by hand''.

For the sake of compactness in what follows we present implications in non-unit form. The name \verb=inspect_base= is used to denote the function implementing base approach.
 
\subsection{Example}
Consider the following example with convex quadrangles. Formal context given by the cross-table in Fig. \ref{cxt} contains convex quadrangles and their properties. The context does not cover the domain completely, i.e. not all possible convex quadrangle types are considered. Attributes ``has equal legs'' and ``has equal angles'' require at least two angles/legs of a quadrangle to be equal. Some dependencies on attributes are obvious, e.g., it is clear that if all angles are equal in a quadrangle, then this quadrangle definitely has equal angles.
 
Four objects are in the context of tentative errors in Fig \ref{err_cxt}. These objects are added to the context in Fig. \ref{cxt} one at a time.

\begin{figure}
    \centering
    \begin{cxt}
    \cxtName{Convex quadrangles}

    \atr{has equal legs}
    \atr{has equal angles}
    \atr{has right angle}
    \atr{all legs equal}
    \atr{all angles equal}
    \atr{at least 3 different angles}
    \atr{at least 3 different legs}

    \obj{xxxxx..}{Square}
    \obj{xxx.x..}{Rectangle}
    \obj{.....xx}{Quadrangle}
    \obj{xx.x...}{Rhombus}
    \obj{xx.....}{Parallelogram}
    \obj{.xx..xx}{Rectangular trapezium}
    \obj{x.x..xx}{Quadrangle with 2 equal legs and right angle}
    \obj{xx...x.}{Isosceles trapezium}
    \obj{xxx..xx}{Rectangular trapezium with 2 equal legs}
    \obj{.x...xx}{Quadrangle with 2 equal angles}
    \obj{x....xx}{Quadrangle with 2 equal legs}
    \obj{xx...xx}{Quadrangle with 2 equal legs and 2 equal angles}
    \end{cxt}
    \caption{Context of convex quadrangles $\context$}\label{cxt}
\end{figure}

\begin{figure}
    \centering
    \begin{cxt}
    \cxtName{Tentative errors}

    \atr{has equal legs}
    \atr{has equal angles}
    \atr{has right angle}
    \atr{all legs equal}
    \atr{all angles equal}
    \atr{at least 3 different angles}
    \atr{at least 3 different legs}

    \obj{x..x.x.}{Case1}
    \obj{x.xxx..}{Case2}
    \obj{.xxxxxx}{Case3}
    \obj{xx.x..x}{Case4}
    \end{cxt}
    \caption{Context of tentative errors $\context_{\mbox{e}}$}\label{err_cxt}
\end{figure}

{\samepage Inspecting Case1:
\begin{description}
    \item{\verb=inspect_base=}\hfill \\
        at least 3 different angles $\to$ at least 3 different legs \smallskip \\
        all legs equal $\to$ has equal angles, has equal legs
    \item{\verb=inspect_closure=}\hfill \\
        has equal legs, at least 3 different angles $\to$ at least 3 different legs,\\ \textoverline{all legs equal} \smallskip \\
        has equal legs, all legs equal $\to$ has equal angles, \textoverline{at least 3 different angles}
\end{description}}
Both algorithms reveal possible errors in a similar manner, although there are obvious differences. In the output of \verb=inspect_base= the premises are smaller than in the output of \verb=inspect_closure=. The latter also reveals dependencies of Class \ref{2class}. It is easy to see that all output implications hold in data domain. For example, if all legs are equal in a quadrangle, it should have equal angles and should not have 3 different angles. Hence, this object should be recognized as an error, it should be corrected to rhombus or to quadrangle with two equal legs.

\medskip

{\samepage Inspecting Case2:
\begin{description}
    \item{\verb=inspect_base=}\hfill \\
        all angles equal $\to$ has equal angles, has equal legs, has right angle \smallskip \\
        all legs equal $\to$ has equal angles, has equal legs
    \item{\verb=inspect_closure=}\hfill \\
        has right angle, has equal legs, all legs equal, all angles equal $\to$ has equal angles
\end{description}}
In this example we are able to ask even less questions to an expert using \verb=inspect_closure= as with \verb=inspect_base=. This is the result of finding implications generated by maximal subsets of object's intent. The intent of Case2 occurs in the context (in the intent of Square), that is why we do not get any negative attributes in the output of \verb=inspect_closure=. Again, all implications are valid in data domain, therefore, Case2 is an error, it should be corrected to square.

\medskip

{\samepage Inspecting Case3:
\begin{description}
    \item{\verb=inspect_base=}\hfill \\
        all angles equal $\to$ has equal angles, has equal legs, has right angle\\ \smallskip
        all legs equal $\to$ has equal angles, has equal legs
    \item{\verb=inspect_closure=}\hfill \\
        has equal angles, has right angle, at least 3 different legs, at least 3 different angles $\to$ \textoverline{all angles equal}, \textoverline{all legs equal} \smallskip \\
        has equal angles, has right angle, all legs equal, all angles equal $\to$ has equal legs, \textoverline{at least 3 different angles}, \textoverline{at least 3 different legs}
\end{description}}
In Case3 we get both implications from the output of \verb=inspect_base= combined in one implication with a bigger premise in the output of \verb=inspect_closure=. In addition we obtain several implications with negative attributes. It is easy to see that all implications hold in the data domain, therefore, Case3 is an error and should be corrected either to rectangular trapezium or to square.

\medskip

{\samepage Inspecting Case4:
\begin{description}
    \item{\verb=inspect_base=}\hfill \\
        has equal angles, has equal legs, at least 3 different legs, all legs equal $\to$ has right angle, at least 3 different angles, all angles equal
    \item{\verb=inspect_closure=}\hfill \\
        has equal angles, has equal legs, all legs equal $\to$ \textoverline{at least 3 different legs}\smallskip \\
        has equal angles, has equal legs, at least 3 different legs $\to$ at least 3 different angles, \textoverline{all legs equal}
\end{description}}
Case4 is a very special case where the corresponding implication from canonical base has empty support. In the output of \verb=inspect_base= we obtain all  questions possible for this intent. As discussed above these questions are not based on any information input so far. The reason for that is that Case4 has maximal intent in the context. So these questions could also be found using Proposition \ref{max_intent}. However, even if we add attributes ``at least 3 different angles'' and ``all angles equal'' and reject the last implication we would not be able to recognize this object as an error. On the contrary \verb=inspect_closure= allows us to recognize errors of Class \ref{2class} and state that Case4 should be corrected to have the intent of rhombus or quadrangle with two equal legs and two equal angles.

\section{Experiment}\label{experiment}

{Below the results of experiments on synthetic data are presented. The experiments were conducted as follows: all objects are first taken one by one out from the context and then added as new objects; all the possible errors of Classes 1 and 2 are found and output. This experiment is run for the purpose of testing the efficiency of the algorithm, \emph{not} as an attempt to find errors.

An FCA package for Python was used for implementation (\cite{RevenkoPyFCA}). For computing the canonical base an optimized algorithm based on \verb=Next Closure= was used (\cite{Obiedkov2007}). All tests described below were run on computer with Intel Core i7 1.6GHz processor and 4 Gb of RAM running Linux Ubuntu 11.10 x64.}

In Fig. \ref{graph} the results of running both algorithms on synthetic contexts are presented. For each context the number of objects is equal to 50. Parameter $d$ represents the density of the context, i.e. the probability of having a cross in the cross-table representing the relation. This result is presented in the semi-logarithmic scale. It is easy to note that with the growth of  the number of attributes and the density, the difference between runtime of two algorithms grows as well.

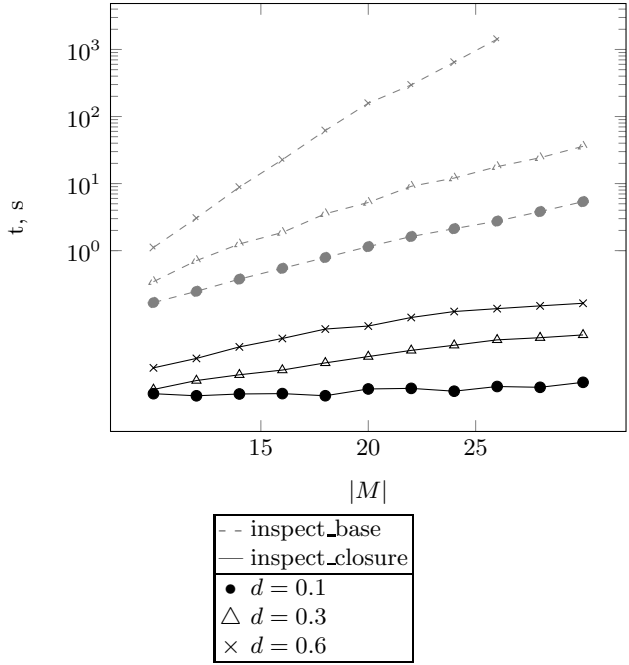
\begin{figure}
\centering
\begin{tikzpicture}
    \begin{axis}[xlabel=$|M|$, ylabel={t, s},
                  legend columns=2,
                  ymin=0, ytick={1,10,100, 1000}, ymode=log,
                  xtick={15,20,25}]
            \addplot[mark=*,gray, dashed] coordinates {
                (10, 0.167965544595)
                (12, 0.247017635239)
                (14, 0.375766846869)
                (16, 0.543842315674)
                (18, 0.789092620214)
                (20, 1.14722710186)
                (22, 1.62289366457)
                (24, 2.12516090605)
                (26, 2.75543173154)
                (28, 3.8202943537)
                (30, 5.37453644805)
                };
                  
        	   \addplot[mark=*,black] coordinates {
                (10, 0.0073512395223)
(12, 0.00683691766527)
(14, 0.00727897220188)
(16, 0.00737159781986)
(18, 0.00684918297662)
(20, 0.008645468288)
(22, 0.0088245206409)
(24, 0.008004811075)
(26, 0.00940683152941)
(28, 0.00915070374807)
(30, 0.010860840479)
                };
                
            \addplot[mark=triangle,gray, dashed] coordinates {
                 (10, 0.344418048859)
                 (12, 0.709496061007)
                 (14, 1.25616751777)
                 (16, 1.86424203714)
                 (18, 3.54632928636)
                 (20, 5.22309121821)
                 (22, 9.10261064106)
                 (24, 12.0801133447)
                 (26, 17.8718641599)
                 (28, 24.3004840745)
                 (30, 35.9667735497)
                };                

            \addplot[mark=triangle,black] coordinates {
                (10, 0.00849596659342)
                 (12, 0.0115488237805)
                 (14, 0.0140699677997)
                 (16, 0.016548289193)
                 (18, 0.0211648543676)
                 (20, 0.0263027217653)
                 (22, 0.0325202941895)
                 (24, 0.0387960539924)
                 (26, 0.0466770728429)
                 (28, 0.0502827829785)
                 (30, 0.0553248738183)
                };
                
            \addplot[mark=x,gray, dashed] coordinates {
(10, 1.11483882533)
(12, 3.05852622456)
(14, 8.86141722732)
(16, 22.6997146871)
(18, 61.8963118394)
(20, 157.937982559)
(22, 295.927075134)
(24, 646.7)
(26, 1424)
                };                

            \addplot[mark=x,black] coordinates {
(10, 0.0178514321645)
(12, 0.0246753030353)
(14, 0.0366503265169)
(16, 0.0489775472217)
(18, 0.0676475498411)
(20, 0.0750198496713)
(22, 0.10060983234)
(24, 0.1235)
(26, 0.1364)
(28, 0.15)
(30, 0.164)
                };
        \end{axis}
\end{tikzpicture}

\centering
\begin{tabular}{|c l|}
\hline
\textcolor{gray}{- -} & inspect\_base\\
\textcolor{black}{---} & inspect\_closure\\
\hline
$\bullet$ & $d=0.1$\\
$\triangle$ & $d=0.3$\\
$\times$ & $d=0.6$\\
\hline
\end{tabular}
\caption{Comparison of runtime on synthetic contexts in semilog scale.}\label{graph}
\end{figure}

Another experiment was conducted to test the quality of finding errors by the introduced method. The information about dependencies between negative attributes is not reflected in the implication base. Therefore, more implications are usually violated by objects having more attributes in their intent. Small intents usually violate only few implications. However, in this experiment we aim at finding not only the errors affecting the implication base; therefore, it is necessary to level out the shift between larger and smaller intents. For this purpose in the following experiments a slight modification of the introduced method is used. The complementary context to a context $\context = \GMI$ is defined as $\context^c := (G, M, (G\times M) \setminus I)$. The method applied to the complementary context will output implications with only negative attributes in the premise. Running the introduced method on both original context and complementary context yields better results. Note that implications with both positive and negative attributes will not be generated.

The experiments were conducted in the following settings. An object was picked up from a context, from one to three errors were randomly introduced into the intent of the object. The method was used to find possible errors in the object. If all the erroneous attributes were in the conclusion of the unit implications with \emph{the same} premise then the errors were marked as found. In this case all the erroneous attributes are contained in one question to the user. Afterwards the object already without errors is returned to the context and the next object is picked up. We considered three contexts from the UCI repository (\cite{Frank2010}): \texttt{SPECT}, \texttt{house-votes-84} and \texttt{kr-vs-kp}. Therefore, nine experiments were conducted. In every experiment 1000 objects (with possible repetitions) were picked up one after another. In Table \ref{table:quality} the results of the experiment are presented.

\begin{table}
    \begin{subtable}{\textwidth}
        \centering
        \begin{tabular}{| c | c | c | c |}
	\hline
	Context Name & $|G|$ & $|G|$ after reducing & $|M|$ \\
	\hline
	\verb=SPECT= & 267 & 133 & 23 \\
	\verb=house-votes-84= & 232\footnotemark[1] & 104 & 17 \\
	\verb=kr-vs-kp= & 3198 & 453 & 36\footnotemark[2] \\
	\hline
	\end{tabular}
    \end{subtable}
    \bigskip\\

    \begin{subtable}{\textwidth}
        \centering
        \begin{tabular}{| c | c | c | c | c | c |}
	\hline
	\parbox[][1.6cm][c]{1.3cm}{\centering Number of Errors per Object} & \parbox{1.3cm}{\centering Errors Found} & \parbox{1.8cm}{\centering Found / All Ratio} & \parbox{1.8cm}{\centering Total Number of Implications} & \parbox{1.8cm}{\centering Implications per Object} \\
	\hline
	1 & 548 & 0.548 & 2298 & 2.41 \\
	2 & 242 & 0.242 & 2753 & 2.80 \\
	3 & 131 & 0.131 & 2703 & 2.71 \\
	\hline
	\end{tabular}
        \caption{\texttt{SPECT}}
    \end{subtable}

    \begin{subtable}{\textwidth}
        \centering
        \begin{tabular}{| c | c | c | c | c | c |}
	\hline
	\parbox[][1.6cm][c]{1.3cm}{\centering Number of Errors per Object} & \parbox{1.3cm}{\centering Errors Found} & \parbox{1.8cm}{\centering Found / All Ratio} & \parbox{1.8cm}{\centering Total Number of Implications} & \parbox{1.8cm}{\centering Implications per Object} \\
	\hline
	1 & 712 & 0.712 & 9780 & 11.4 \\
	2 & 217 & 0.217 & 14018 & 14.7 \\
	3 & 71 & 0.071 & 18276 & 18.4 \\
	\hline
	\end{tabular}
        \caption{\texttt{house-votes-84}}
    \end{subtable}

    \begin{subtable}{\textwidth}
        \centering
        \begin{tabular}{| c | c | c | c | c | c |}
	\hline
	\parbox[][1.6cm][c]{1.3cm}{\centering Number of Errors per Object} & \parbox{1.3cm}{\centering Errors Found} & \parbox{1.8cm}{\centering Found / All Ratio} & \parbox{1.8cm}{\centering Total Number of Implications} & \parbox{1.8cm}{\centering Implications per Object} \\
	\hline
	1 & 786 & 0.786 & 7520 & 8.47 \\
	2 & 393 & 0.393 & 12863 & 13.2 \\
	3 & 247 & 0.247 & 18322 & 18.3 \\
	\hline
	\end{tabular}
        \caption{\texttt{kr-vs-kp}}
    \end{subtable}
    \caption{Error finding experiment carried out on three contexts from UCI.}
    \label{table:quality}
    \bigskip
    \underline{\hspace{2cm}}

    {\footnotesize
    \hskip 0.5cm ${}^1$All objects containing missing values were removed.

    \hskip 0.5cm ${}^2$Attribute 15 was removed due to many-valuedness.}
\end{table}

As can be seen from the results the more objects there are in the context, the better method works. In \texttt{SPECT} intents are very diverse (there are only 5.78 irreducible objects per attribute on average) that is why not more than three implications on average are output and bad ratio of found errors is obtained. In \texttt{house-votes-84} intents are more similar, that is why we have more questions per object. The ratio of found errors for one error is relatively high, however, it quickly drops with the increase of errors, as the number of irreducible objects is small. In \texttt{kr-vs-kp} there are much more objects per attribute and the results for two and three errors at a time are much better. However, if the user is able to correctly answer all unit implications even better results can be achieved. In this case the user may correct first errors and repeat the procedure having already only one error. In these experiments the error-finding process was considered successful only if there is one implication suggesting all the needed corrections at once.

It is worth noting that  the chance of random guess in predicting all errors in an object description is only $1/(|M|^n)$ if there are $n$ errors as compared to 50\% for the classification task.

\section{Conclusion}
A method for finding errors  in implicative theories was introduced. The method uses some techniques based on Formal Concept Analysis. As opposed to finding the canonical (cardinality minimal) base of implications, which can be very time consuming due to intrinsic intractability, the proposed algorithm terminates in polynomial time. Moreover, after checking maximal object descriptions (object intents) ``by hand'' it is possible to find all errors of two considered types or prove their absence. Computer experiments show that in practice the proposed method works much faster than that based on the generation of the implication base.

\subsubsection{Acknowledgements}
The first author was supported by German Academic Exchange Service (DAAD). The second author was supported by the Basic Research Program of the National Research University Higher School of Economics, project Mathematical models, algorithms, and software for knowledge discovery in structured and text data. We thank Bernhard Ganter and Sergei Obiedkov for discussion and useful remarks.

\bibliographystyle{plain}
\bibliography{mathrefs,my_refs}

\end{document}